\documentclass{article}



    \usepackage[preprint]{neurips_2025}



\usepackage[table]{xcolor}
\usepackage[utf8]{inputenc} 
\usepackage[T1]{fontenc}    
\usepackage{hyperref}       
\usepackage{url}            
\usepackage{amsfonts}       
\usepackage{nicefrac}       
\usepackage{microtype}      
\usepackage{xcolor}         
\usepackage{enumerate}
\usepackage{enumitem}
\usepackage{subcaption}
\usepackage{mathtools}
\usepackage{array}
\usepackage{multirow}
\usepackage{siunitx}
\usepackage{caption}
\usepackage{placeins}
\usepackage{algorithm}
\usepackage[noend]{algpseudocode}   
\usepackage{listings}
\usepackage{tabularx}
\usepackage{pifont}
\usepackage{threeparttable}
\usepackage{adjustbox}
\usepackage{booktabs}
\usepackage{tikz}
\usepackage{amsmath, amssymb, amsthm, graphicx}
\usepackage{pgfplotstable}
\usepackage{mathtools, mathrsfs}
\usepackage[most]{tcolorbox}
\usepackage{colortbl} 
\usepackage{stfloats}
\usepackage{cuted}
\usepackage{capt-of}
\usepackage{calc}

\definecolor{SpectreBack}{HTML}{E8F2FE}  
\tcbset{
  spectrebox/.style={
    enhanced,
    breakable,
    colback=SpectreBack,        
    colframe=SpectreBack!70!black, 
    boxrule=0.8pt,              
    arc=4pt,                    
    drop shadow={black!40!white},
    left=1.2em, right=1.2em,
    top=0.9em, bottom=0.9em,
  }
}

\newcolumntype{Y}{>{\centering\arraybackslash}X}  

\usetikzlibrary{positioning,arrows.meta,shadows.blur,backgrounds}
\definecolor{headerbg}{HTML}{DAE9FF}
\definecolor{headerfg}{HTML}{002E5D}
\definecolor{altrowbg}{HTML}{F9FBFF}
\definecolor{highlightrowbg}{HTML}{EEF9F2}
\definecolor{lpink}{HTML}{FFD1DF}
\definecolor{SpectreHL}{HTML}{FFF4CC} 

\definecolor{Emph}{HTML}{FFD1DF}  
\tcbset{
  emph/.style={
    enhanced,
    breakable,
    colback=Emph,        
    colframe=Emph!70!black, 
    boxrule=0.8pt,              
    arc=4pt,                    
    drop shadow={black!40!white},
    left=1.2em, right=1.2em,
    top=0.9em, bottom=0.9em,
  }
}

\newcolumntype{L}[1]{>{\raggedright\arraybackslash}p{#1}}
\newcolumntype{C}[1]{>{\centering\arraybackslash}p{#1}}
\pgfplotsset{compat=1.18}

\newtheorem{remark}{Remark}
\newtheorem{proposition}{Proposition}[section]

\newtheorem{corollary}[proposition]{Corollary}
\newtheorem{theorem}[proposition]{Theorem}

\newcolumntype{R}[1]{>{\raggedright\arraybackslash}p{#1}}







\definecolor{Header}{RGB}{245,245,245}   
\definecolor{Accent}{RGB}{230,249,235}   
\definecolor{headergray}{gray}{0.9}
\definecolor{fgtmgreen}{RGB}{225,255,225}

\definecolor{headerbg}{HTML}{C7DBF4}      
\definecolor{altrow}{HTML}{F3F6FB}        
\definecolor{highlightrow}{HTML}{FFF4D7}  

\setlength{\abovecaptionskip}{4pt}
\setlength{\belowcaptionskip}{4pt}
\setlength{\floatsep}{4pt}
\setlength{\textfloatsep}{4pt}
\setlength{\intextsep}{4pt}

\sisetup{
    round-mode = places,
    round-precision = 2,
}

\definecolor{mycolor}{HTML}{4779c4}

%

\newcolumntype{d}[1]{S[table-format=#1]}

\newcommand{\ballnumber}[1]{\tikz[baseline=(myanchor.base)] \node[circle,fill=.,inner sep=1pt] (myanchor) {\color{-.}\bfseries\footnotesize #1};}

\title{SPECTRE: An FFT-Based Efficient Drop-In Replacement to Self-Attention for Long Contexts}

%

\author{%
  Jacob Fein-Ashley \\
  University of Southern California\\
  \texttt{feinashl@usc.edu} \\
  \And
  Neelesh Gupta \\
  University of Southern California\\
  \texttt{neeleshg@usc.edu}
  \And
  Rajgopal Kannan\\
  DEVCOM ARL Army Research Office\\  
  \texttt{rajgopal.kannan.civ@army.mil}
  \And
  Viktor Prasanna\\
  University of Southern California\\
  \texttt{prasanna@usc.edu}
}

\begin{document}

\maketitle

\begin{abstract}
Long-context transformers face significant efficiency challenges due to the quadratic cost of self-attention. However, many modern applications—from multi-turn dialogue to high-resolution vision—require contexts spanning tens of thousands of tokens. We introduce SPECTRE, a method that replaces each attention head with a fast real FFT, a content-adaptive spectral gate, and an inverse FFT, reducing per-layer complexity from $\mathcal{O}(L^{2})$ to $O(L\log L)$ while preserving the surrounding architecture. We extend this efficiency to autoregressive generation through our Prefix-FFT cache and enhance local feature representation with an optional wavelet module that adds negligible computational overhead. Our experiments demonstrate that SPECTRE operates up to 7$\times$ faster than FlashAttention-2 on 128k-token contexts while matching or exceeding baseline performance on PG-19 language modeling and ImageNet-1k classification tasks. SPECTRE achieves these improvements by adding fewer than 6\% parameters to the base model, making hundred-kilotoken context processing feasible on commodity GPUs without specialized hardware.
\end{abstract}

\section{Introduction}
\label{sec:intro}

\begin{figure}[!htbp]
  \centering
  \includegraphics[width=0.92\linewidth]{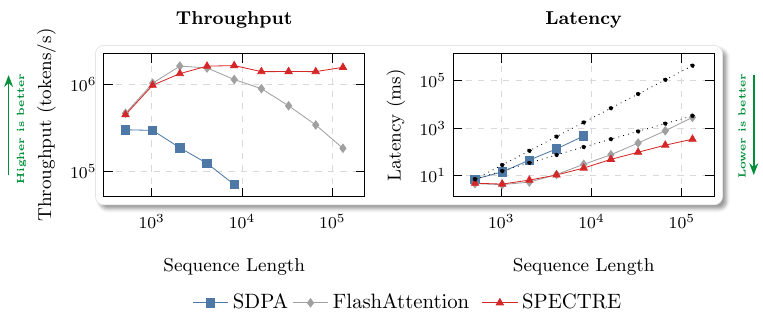}
  \caption{\textbf{Inference scaling of a \texttt{Llama-3.2-1B} model
           equipped with three different attention kernels.}
           We fine-tune an identical backbone with
           \emph{(i)} standard softmax-dot-product attention
           (\textbf{SDPA}, blue),
           \emph{(ii)} \textbf{FlashAttention-2}\,\citep{dao2023flashattention}
           (grey), and
           \emph{(iii)} the proposed \textbf{SPECTRE} mixer (red).
           After training, we measure
           \emph{tokens-per-second throughput} (left) and
           \emph{single-batch latency} (right)
           on an NVIDIA A100-80\,GB for sequence lengths
           $L\!\in\!\{512,\,1\mathrm{k},\,4\mathrm{k},\,8\mathrm{k},\,32\mathrm{k},\,128\mathrm{k}\}$.
           Dashed black lines show the ideal
           $\mathcal{O}(n^{2})$ and $\mathcal{O}(n\log n)$ slopes.
           Higher throughput and lower latency are better (green arrows).
           SPECTRE retains the accuracy of the backbone yet
           delivers near‐$\mathcal{O}(n\log n)$ runtime—
           remaining flat up to $32$k tokens and sustaining a
           $7\times$ speed-up over FlashAttention-2 at the extreme
           $128$k-token setting.}
  \label{fig:throughput-latency}
\end{figure}

\textit{Long contexts unlock stronger reasoning.}  
From multi-turn dialogue and book-length summarization to high-resolution
vision, many modern tasks demand that Transformers attend over tens of
thousands of tokens.  
Yet the \emph{quadratic} $\mathcal{O}(n^{2}d)$ cost of self-attention turns the
context itself into the primary inference bottleneck, straining both latency
and memory on commodity hardware.

\vspace{0.5em}
\textit{Can we keep global context without paying a quadratic bill?}  
A rich line of work accelerates attention via sparse patterns, kernel
approximations, or low-rank structure, but often sacrifices exactness, requires
non-standard optimization, or fails to support streaming generation.  
In contrast, the frequency domain offers an orthogonal route: the Fourier
transform \emph{diagonalizes} circular convolutions, converting global mixing
into cheap, element-wise products.  
Unfortunately, prior spectral mixers either rely on fixed filters or must
recompute an FFT at every time step—blunting their theoretical advantage.

\vspace{0.5em}
\textbf{We answer this challenge with \emph{SPECTRE}}, a \emph{drop-in}
replacement for self-attention that (i)~projects tokens onto an orthonormal
Fourier basis, (ii)~applies content-adaptive diagonal (and optional low-rank)
gates, and (iii)~returns to token space via an inverse transform—achieving
$\mathcal{O}(n\log n)$ complexity without altering the surrounding
architecture.  
A novel \textbf{Prefix–FFT cache} enables streaming decoding analogous to the
standard KV-cache, while a switchable \textbf{Wavelet Refinement Module}
restores the local detail often lost in purely spectral methods. 

A key strength of SPECTRE is its drop-in compatibility with existing model architectures. Unlike approaches requiring specialized optimization or architectural overhauls, SPECTRE can directly replace self-attention layers while preserving the surrounding network architecture. This means existing pre-trained models can be fine-tuned with SPECTRE layers by updating only the newly introduced parameters (<6\% of total weights), dramatically reducing adaptation costs while maintaining or improving performance. 

We summarize our contributions as follows:
\begin{itemize}
    \item We propose SPECTRE, a frequency-domain token mixer whose per-layer cost scales as $\mathcal{O}(n\log n)$ while replacing any multi-head attention layer without architectural changes.
    \item We introduce content-adaptive spectral gating that operates on only $n/2+1$ frequency coefficients, reducing both computation and memory footprint while preserving full expressivity.
    \item We design the Prefix–FFT cache, the first FFT-based key–value cache that enables efficient autoregressive generation with a fixed memory budget.
    \item We demonstrate up to 7$\times$ faster inference than FlashAttention-2 at 32k tokens, while matching or surpassing accuracy on established benchmarks.
\end{itemize}


\vspace{1em}

\section{Background}
\label{sec:background}

\textbf{Quadratic attention is costly.}
Multi-head self-attention scales as $\mathcal{O}(n^{2}d)$
($n$ tokens, $d$ channels), quickly saturating GPU and edge memory
\citep{vaswani2017attention, beltagy2020longformer}.
Linear-time surrogates exist, but they lose exactness or break caching.

\textbf{Spectral shortcut.}
Because the DFT diagonalizes any circulant matrix
($F_n C F_n^{\!*}$ is diagonal) \citep{oppenheim1999discrete},
a global convolution becomes element-wise multiplication,
dropping cost to $\mathcal{O}(nd\log n)$ once an FFT is in place.

\textbf{Real FFT (RFFT).}
Cooley–Tukey lowers a length-$n$ DFT from
$\mathcal{O}(n^{2})$ to $\mathcal{O}(n\log n)$
\citep{cooley1965algorithm}; split-radix is near-optimal
\citep{heideman1984splitradix}.
For real signals, only $(\lfloor n/2\rfloor\!+\!1)$ complex coefficients
are unique, letting RFFT cut memory in half and boost throughput by
$\sim$1.8× \citep{frigo2005fftw}—hence SPECTRE’s choice.

\begin{figure}[!htbp]
  \centering
  \includegraphics[width=\linewidth]{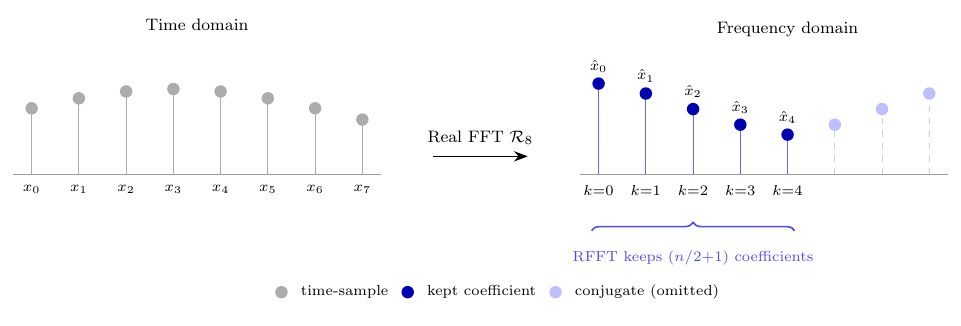}
  \caption{Real FFT: an 8-sample real sequence maps to $(n/2{+}1)$ unique
           coefficients; the shaded half is redundant.}
  \label{fig:rfft-intuition}
\end{figure}

\textbf{Spectral token mixers.}
FNet replaced attention with a fixed FFT but lost content adaptivity
\citep{leethorp2021fnet}.
Hydra added learnable gates yet recomputed an FFT each step
\citep{lee2021hydra}.
SPECTRE (i) learns per-token diagonal/Toeplitz gates and
(ii) caches RFFT values in a streaming \emph{Prefix-FFT} buffer.

\textbf{Multi-resolution detail.}
Wavelets offer local, orthogonal atoms \citep{mallat1989wavelet}.
A optional Wavelet Refinement Module (WRM) restores fine structure at
$\mathcal{O}(nd)$ when needed.

\textbf{Prefix-FFT cache.}
Storing non-redundant RFFT coefficients shrinks key–value memory to
$\mathcal{O}((N_{\max}/2)d)$ and allows log-linear updates per token
\citep{brown2020gpt3}.

\textbf{Persistent memory.}
Long-lived information sits in
$\mathbf{M}\!\in\!\mathbb{R}^{N_{\text{mem}}\times d}$;
its RFFT is computed once and prepended during pre-fill.
Overhead is $\mathcal{O}(N_{\text{mem}}d)$ with
$N_{\text{mem}}\!\ll\!N_{\max}$ (e.g., 16–64).

\textbf{Take-away.}
Log-linear spectral mixing, constant-time FFT caching, and a tiny memory
bank give SPECTRE global context, streaming generation, and long-term
recall on a fraction of quadratic attention’s compute budget.


\section{Method}
\label{sec:method}

We introduce the \emph{Spectral Projection and Content-adaptive Transformer Engine} (\textbf{SPECTRE}), a frequency-domain alternative to multi-head self-attention. SPECTRE preserves the Transformer’s global receptive field while reducing both runtime and memory usage to \(\mathcal{O}(n\, d\log n)\), where \(n\) is the sequence length and \(d\) is the (per-head) embedding dimension.  
The SPECTRE layer operates in three main steps:
\begin{enumerate}[label=(\roman*)]
    \item project tokens onto an orthonormal spectral basis,
    \item apply content-adaptive diagonal (or optional low-rank) gating in that basis, and
    \item perform an inverse transform back to token space.
\end{enumerate}

\subsection{Preliminaries}
\label{ssec:prelim}

Let \(X = [x_{1},\dots,x_{n}] \in \mathbb{R}^{n\times d}\) be the matrix collecting \(n\) token embeddings. Since the inputs are real-valued, we use the \emph{real} fast Fourier transform (RFFT).

\paragraph{Definition of the RFFT.}
For a length-\(n\) real sequence \(x \in \mathbb{R}^n\), its RFFT is
\begin{equation}
  \widehat{x}_k
  \;=\;
  \bigl(\mathcal{R}_{n}\,x\bigr)_k
  \;=\;
  \sum_{t=0}^{n-1}
    x_t \, e^{-j\,2\pi k t / n},
  \qquad
  k = 0,\dots,\Bigl\lfloor\tfrac{n}{2}\Bigr\rfloor.
  \label{eq:rfft_def}
\end{equation}
Because \(x\) is real, the RFFT spectrum satisfies Hermitian symmetry, \(\widehat{x}_{n-k} = \overline{\widehat{x}_{k}}\). Thus, the \(\lfloor n/2\rfloor+1\) coefficients in~\eqref{eq:rfft_def} are sufficient to recover all information.  
We denote \(\mathcal{R}_{n}\) and \(\mathcal{R}^{-1}_{n}\) as the length-\(n\) real FFT and its inverse. Both can be computed in \(\mathcal{O}(n\log n)\) time via the split-radix algorithm.
\subsection{SPECTRE Mixing Layer}
\label{ssec:spectre}
\vspace{-0.6em}

\textbf{Architectural parallel to multi-head attention.}  
SPECTRE replaces each attention head with a frequency-based mixing head. For each head \(h\), we learn query and value projections
\(W^{(q)}, W^{(v)} \in \mathbb{R}^{d \times d}\) (\emph{per head}).

\vspace{0.5em}
\noindent
\textbf{\ballnumber{1} Token projection}
\begin{equation}
  Q = X W^{(q)}, 
  \qquad 
  V = X W^{(v)}, 
  \qquad 
  Q, V \in \mathbb{R}^{n \times d}.
\end{equation}

\vspace{0.5em}
\noindent
\textbf{\ballnumber{2} Spectral transform}
\begin{equation}
  \widehat{V} = \mathcal{R}_{n}(V) 
  \;\in\; \mathbb{C}^{\bigl(\tfrac{n}{2}+1\bigr) \times d},
\end{equation}
where each row corresponds to a frequency bin \(k \in \{0,\dots,n/2\}\). Because \(V\) is real, its discrete Fourier spectrum has Hermitian symmetry (see Appendix~\ref{app:half-spectrum-proof}), and we only store the non-redundant half.

\vspace{0.5em}
\noindent
\textbf{\ballnumber{3} Content-adaptive spectral gating}  
\begin{enumerate}[label=(\alph*),wide=0pt,leftmargin=1em]
    \item \emph{Diagonal gate.}  
          Form a global descriptor \(\bar{q} = \mathrm{LN}\!\bigl(\tfrac{1}{n}\!\sum_{i=1}^{n} q_{i}\bigr)\) and map it via a two-layer MLP to a complex vector \(g\in\mathbb{C}^{(\frac{n}{2}+1)}\).
    \item \emph{Toeplitz low-rank update (bandwidth \(2r+1\)).}  
          Optionally add a depth-wise Toeplitz convolution in the spectral domain:
          \[
              g \;\leftarrow\; g \;+\; (t * g),
              \qquad
              t \in \mathbb{C}^{(2r+1)},
          \]
          at an additional cost of \(\mathcal{O}(n\,r\,d)\).
    \item \emph{modReLU activation.}  
          and then set \(g \leftarrow \widetilde{g}\).
\end{enumerate}

\vspace{0.5em}
\noindent
\textbf{\ballnumber{4} Inverse transform}
\begin{equation}
  \widetilde{V}
  = \mathcal{R}_{n}^{-1}\!\bigl(\mathrm{diag}(g)\,\widehat{V}\bigr)
  \;\in\; \mathbb{R}^{n \times d},
\end{equation}
after which all heads \(h\) are concatenated as usual.

\subsection{Prefix–FFT Cache}
\label{ssec:prefix_fft}

SPECTRE’s frequency-domain KV-cache is executed in two phases:
\textbf{pre-fill}—a one-shot initialisation over the prompt—and
\textbf{decode}—an incremental update performed once per generated token.
Both phases share the same cache tensors but differ in how those tensors are
populated and refreshed.

\subsubsection{Pre-fill (context initialization)}
\label{sssec:prefill}

Given a prompt of length \(L\!\le\!N_{\max}\), we compute a single, padded
\(N_{\max}\)-point real FFT:
\[
  \widehat{V}^{(L)}
  = \mathcal{R}_{N_{\max}}\!\bigl(\mathrm{pad}(V,\,N_{\max})\bigr)
  \;\in\;
  \mathbb{C}^{\bigl(\tfrac{N_{\max}}{2}+1\bigr)\times d}.
\]
The non-redundant coefficients fill
\texttt{prefix\_fft}\(\in\mathbb{C}^{(\frac{N_{\max}}{2}+1)\times d}\).
Concurrently we populate the ring buffers
\(\texttt{V\_buf},\texttt{Q\_buf}\in\mathbb{R}^{N_{\max}\times d}\) and the
running descriptor
\(\texttt{sum\_q}=\sum_{i=0}^{L-1}q_i\).
The cost is
\(\mathcal{O}(N_{\max}\log N_{\max}\,d)\) time and
\(\mathcal{O}(N_{\max}d)\) memory—identical to a standard attention
KV pre-fill.

\subsubsection{Decode (incremental extension)}
\label{sssec:decode}

For each subsequent step \(t\ge L\) we perform:
\begin{enumerate}[label=(\alph*),nosep,leftmargin=1.5em]
  \item \textbf{Evict \& update FFT cache.}  
        Let \(v_{\text{old}}=\texttt{V\_buf}[t\bmod N_{\max}]\)
        (zero if \(t<N_{\max}\)).
        For every frequency bin \(k\),
        \begin{equation}
          \texttt{prefix\_fft}_{k,:}
          \leftarrow
          \texttt{prefix\_fft}_{k,:}
          - \mathbf{1}_{\{t\ge N_{\max}\}}\,
            v_{\text{old}}^{\!\top}
            e^{-j\,2\pi k (t-N_{\max})/N_{\max}}
          + v_t^{\!\top}
            e^{-j\,2\pi k t/N_{\max}},
          \label{eq:fft_update_eviction}
        \end{equation}
        where twiddle factors are pre-cached.
  \item \textbf{Refresh ring buffers \& descriptors.}  
        Overwrite \(\texttt{V\_buf}[t\bmod N_{\max}]\gets v_t\) and
        \(\texttt{Q\_buf}[t\bmod N_{\max}]\gets q_t\);
        update
        \(
          \texttt{sum\_q}\leftarrow
          \texttt{sum\_q}
          - \mathbf{1}_{\{t\ge N_{\max}\}}q_{\text{old}}
          + q_t
        \).
  \item \textbf{Compute spectral gate.}  
        Feed the normalized descriptor
        \(
          \bar{q}^{(t)}=
          \mathrm{LN}\!\bigl(\texttt{sum\_q}/N_{\max}\bigr)
        \)
        through a two-layer MLP to obtain
        \(g\in\mathbb{C}^{(\frac{N_{\max}}{2}+1)}\).
  \item \textbf{Inject positional phase.}  
        \(g_{k}\leftarrow g_{k}\,e^{j\,2\pi k t/N_{\max}}\).
  \item \textbf{Inverse real FFT.}  
        \[
          \widetilde{V}=
          \mathcal{R}^{-1}_{N_{\max}}
          \!\bigl(\mathrm{diag}(g)\,\texttt{prefix\_fft}\bigr),
        \]
        and the last
        \(L'=\min(t+1,N_{\max})\) rows serve as the live context.
\end{enumerate}

Each decode step costs
\(\mathcal{O}\!\bigl(\tfrac{N_{\max}}{2}d\bigr)\) time and retains a constant
\(\mathcal{O}(N_{\max}d)\) memory footprint, precisely mirroring the efficiency
of an attention KV-cache.

\subsection{Persistent Memory Extension}
\label{ssec:memory}

While the Prefix–FFT cache covers a sliding window of
\(N_{\max}\) recent tokens, certain tasks benefit from information
that should \emph{never} be evicted (e.g.\ user profile, document
header, long-term planning cues).  
We attach a small, fixed–size \textbf{persistent memory bank}
\(\mathbf{M}\in\mathbb{R}^{N_{\text{mem}}\times d}\)
that is
\emph{prepended} to every sequence and carried across decoding steps.

\vspace{0.4em}
\paragraph{Spectral representation.}
We store the non-redundant RFFT of the memory once:
\[
  \widehat{\mathbf{M}}
  = \mathcal{R}_{N_{\text{mem}}}\!\bigl(\mathbf{M}\bigr)
  \;\in\;
  \mathbb{C}^{\bigl(\tfrac{N_{\text{mem}}}{2}+1\bigr)\times d},
\]
which is \(\mathcal{O}(N_{\text{mem}}d)\) in memory and never changes
during a generation session.

\vspace{0.4em}
\paragraph{Integration at \textit{pre-fill}.}
During the pre-fill step (§\ref{sssec:prefill}) we concatenate
\(\widehat{\mathbf{M}}\) with the prompt coefficients:
\[
  \texttt{prefix\_fft}
  \;=\;
  \widehat{\mathbf{M}}
  \;\Vert\;
  \widehat{V}^{(L)},
\]
and we pad the time-domain ring buffers with the \emph{untransformed}
memory rows so that indices remain aligned.  
No additional FFT is required.

\vspace{0.4em}
\paragraph{Integration at \textit{decode}.}
At each incremental step (§\ref{sssec:decode}) we:

\begin{enumerate}[label=(\alph*),nosep,leftmargin=1.5em]
  \item run the normal sliding-window update on the \emph{prompt}
        coefficients only (indices \(k \ge N_{\text{mem}}/2\));
        the memory part is untouched;
  \item build the spectral gate \(g\) for the full length
        \(N_{\text{mem}} + N_{\max}\);
  \item apply the inverse FFT in one shot over the concatenated
        \(\widehat{\mathbf{M}}\Vert\texttt{prefix\_fft}\).
\end{enumerate}

Because \(\widehat{\mathbf{M}}\) is static, the per-step complexity
remains unchanged:
\(\mathcal{O}\!\bigl(\tfrac{N_{\max}}{2}d\bigr)\) time and
\(\mathcal{O}\!\bigl((N_{\max}+N_{\text{mem}})d\bigr)\) memory, where
\(N_{\text{mem}}\!\ll\!N_{\max}\) in practice (e.g.\ 16–64).

\vspace{0.4em}
\paragraph{Learning the memory.}
\(\mathbf{M}\) is optimized jointly with the model and can be:
\begin{itemize}[nosep,leftmargin=1.5em]
  \item \textit{global}, shared by all inputs (cf.\ prefix tokens);
  \item \textit{task-specific}, selected via an index lookup; or
  \item \textit{user-specific}, updated asynchronously and synced to
        the inference server.
\end{itemize}

\subsection{Optional Wavelet Refinement}
\label{ssec:wavelet}

Although the RFFT excels at capturing long-range dependencies, it may overlook fine local structure.  
A lightweight \emph{Wavelet Refinement Module} (WRM) can restore local detail. It is applied conditionally—skipped in \(\approx 90\%\) of batches by a learned binary controller:
\begin{enumerate}[label=(\alph*),nosep,leftmargin=1.5em]
  \item Apply an orthogonal DWT along the sequence axis:  
        \(\widehat{W} = \mathcal{W}_{n}(\widetilde{V}).\)
  \item From \(\bar{q}\), a two-layer MLP outputs real, channel-wise wavelet level gates \(s\in\mathbb{R}^{n\times d}.\)
  \item Modulate the wavelet coefficients:  
        \(\widehat{W} \leftarrow s\odot\widehat{W}.\)
  \item Reconstruct via the inverse DWT:  
        \(\widehat{V}_{\mathrm{ref}} = \mathcal{W}_{n}^{-1}(\widehat{W})\).  
        Form the final output  
        \(V_{\mathrm{out}} = \widetilde{V} + \widehat{V}_{\mathrm{ref}}.\)
\end{enumerate}
The WRM is linear, orthogonal, and differentiable; its \(\mathcal{O}(n\,d)\) cost is amortized over the skip ratio determined by the controller.

\paragraph{Positional Awareness}
\label{ssec:position}

Because the real FFT is translation-equivariant, we must inject absolute position explicitly.  
For a token at position \(p_i\!\in\!\{0,\dots,n-1\}\) and frequency bin \(k\), we multiply the spectral gate by a complex exponential:
\[
  g_{k} \;\leftarrow\; g_{k}\,\exp\!\bigl(j\,2\pi k \,p_{i}/n\bigr),
\]
preserving relative-shift equivariance while incorporating absolute positional information.

\subsection{Integration and Fine-Tuning}
\label{ssec:integration}

Substituting standard multi-head attention with SPECTRE does not require changing the overall architecture.  
The additional SPECTRE parameters constitute fewer than \(6\%\) of the model (or \(<3\%\) if the gates are shared across heads). Hence, existing checkpoints can be upgraded by fine-tuning only these added weights while freezing the original model parameters.

\begin{figure}[!htbp]
  \centering
  \includegraphics[width=0.94\linewidth]{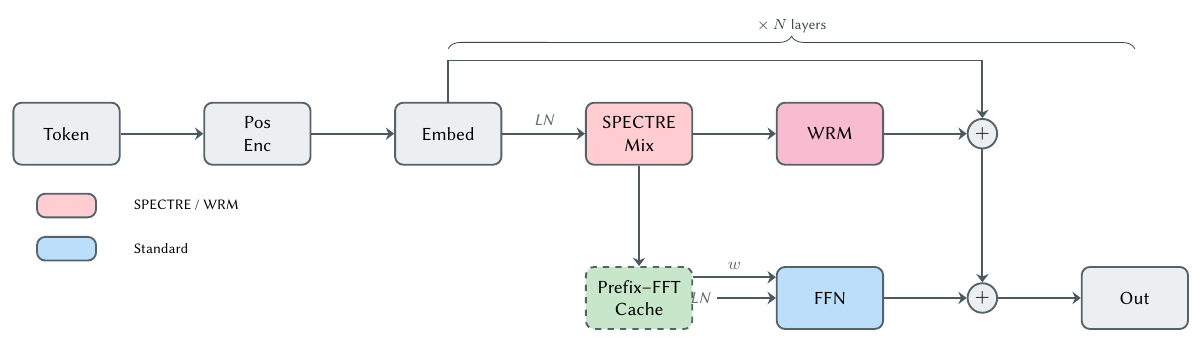}
  \caption{\textbf{Drop-in SPECTRE layer.}  
           The SPECTRE mixing block (pink) and the optional Wavelet Refinement
           Module (\textbf{WRM}) can be inserted between the embedding layer
           and the feed-forward network (\textbf{FFN}) of a standard
           Transformer.  A Prefix–FFT cache (green, dashed) mirrors the
           attention KV-cache, enabling efficient autoregressive decoding
           without altering residual pathways or layer normalization
           placements.  Existing checkpoints therefore, require only minimal
           fine-tuning to upgrade from attention to SPECTRE.}
  \label{fig:spectre_dropin}
\end{figure}

\subsection{Summary}
\label{ssec:summary}

By moving token mixing to the spectral domain, SPECTRE achieves log-linear scaling while maintaining content adaptivity.  
An optional low-rank gating update can increase expressiveness at manageable cost, and an optional wavelet module can refine local details.  
We also introduced the \textbf{Prefix–FFT cache} that mirrors standard \emph{KV-caching} in self-attention but applies incremental frequency-domain updates for efficient autoregressive decoding.  
Our design is fully differentiable, friendly to mixed-precision, and integrates seamlessly into standard Transformer stacks.  
Section~\ref{sec:experiments} presents empirical results on language and vision benchmarks.

\begin{figure}[!htbp]
  \centering
  \includegraphics[width=\linewidth]{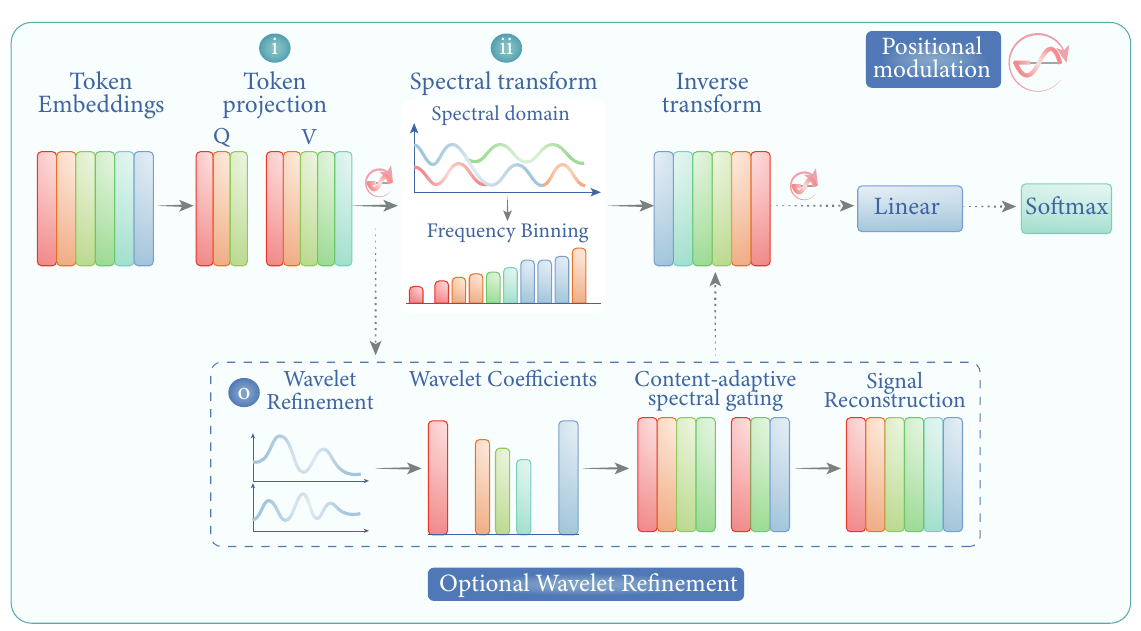}
  \caption{\textbf{SPECTRE’s frequency-domain token mixing.} Token embeddings are
  projected, transformed via a real FFT, gated \emph{per frequency} by a
  content-adaptive diagonal mask (with positional phase), and returned
  to token space using an inverse FFT.  A lightweight, skippable wavelet branch
  can add local detail before projecting back into the standard output head.}
  \label{fig:spectre_arch}
\end{figure}

\section{Experiments}
\label{sec:experiments}

\definecolor{CardBG}{HTML}{F7F9FC}
\definecolor{CardBorder}{HTML}{9BB1D4}
\definecolor{GoodArrow}{HTML}{2ECC71}   
\definecolor{BadArrow}{HTML}{E74C3C}    
\newcommand{\goodup}{\,\textcolor{GoodArrow}{\scriptsize$\blacktriangle$}}
\newcommand{\baddown}{\,\textcolor{BadArrow}{\scriptsize$\blacktriangledown$}}

\paragraph{Goals.}
Our evaluation answers three questions:
\begin{enumerate}[label=\arabic*.]
    \item \textbf{Efficiency.} How much faster is \emph{SPECTRE} than the highly‑optimised
          \emph{FlashAttention 2} (\textsc{FA2})~\citep{dao2023flashattention} at inference time on long contexts?
    \item \textbf{Accuracy.} Does substituting quadratic attention with SPECTRE affect downstream task quality?
    \item \textbf{Component utility.} What do the two architectural additions—the
          (i)~low‑rank spectral update and (ii)~Wavelet Refinement Module (WRM)—each contribute?
\end{enumerate}

\subsection{Efficiency Benchmarks}
\label{ssec:efficiency}

\subsubsection{Prefill \textit{vs.}\,Decode Performance}
\label{sssec:ttft-tpot}

While §\ref{ssec:efficiency} reports \emph{end-to-end} throughput and
latency, deployment engineers typically care about two finer-grained
metrics:

\begin{itemize}[leftmargin=1.3em]
  \item \textbf{Time to First Token (TTFT).}  One-shot “prefill’’ latency—the
        wall-clock time from receiving the prompt until the \emph{first}
        generated token becomes available.
  \item \textbf{Time per Output Token (TPOT).}  Steady-state latency of the
        incremental \emph{decode} step that produces each subsequent token.
\end{itemize}

\begin{table*}[!htbp]          
  \centering
  \footnotesize                
  \begin{tikzpicture}
    \node[
      fill=CardBG,
      draw=CardBorder,
      rounded corners=4pt,
      line width=0.8pt,
      inner sep=8pt,           
      drop shadow={opacity=0.25, xshift=2pt, yshift=-2pt}
    ] {%
      \begin{tabularx}{\textwidth}{@{}lYY@{}}   
        \toprule
        \textbf{Kernel} &
        \textbf{TTFT}\,$\downarrow$ [ms] &
        \textbf{TPOT}\,$\downarrow$ [ms] \\ \midrule
        SDPA (Baseline)   & 378.0 & 0.282 \\
        FlashAttention 2  &  96.5 & 0.058 \\ \midrule
        \rowcolor{SpectreHL!30}
        SPECTRE           & \textbf{41.8} & \textbf{0.015} \\
        \rowcolor{SpectreHL!30}
        \textsc{-LR}      & 43.2 & 0.016 \\
        \rowcolor{SpectreHL!30}
        \textsc{-WRM}     & 40.6 & 0.015 \\ \bottomrule
      \end{tabularx}
    };
  \end{tikzpicture}
  \vspace{1em}
  \caption{Fine-grained latency at $L{=}32$k tokens. SPECTRE slashes the
           one-off prefill cost (\textbf{TTFT}) by $2.3\times$ versus
           FlashAttention 2 and by $9\times$ versus SDPA, while its
           \textbf{TPOT} is $4\times$ and $19\times$ lower, respectively.}
  \label{tab:ttft-tpot}
\end{table*}

Table~\ref{tab:ttft-tpot} breaks inference down along these axes at a
sequence length of $L{=}32$k tokens, measured on a single NVIDIA
A100-80\,GB (batch size 1, fp16).  Numbers are the mean of five runs.

\paragraph{Takeaway.}  The Prefix–FFT cache not only accelerates the
long-sequence “context priming’’ phase but also delivers a lean, constant-time
update path for each new token, making SPECTRE particularly attractive for
interactive applications where first-token latency and streaming-LLM
responsiveness are critical.

\subsection{Latency/Throughput}

Table~\ref{tab:efficiency} lists end‑to‑end inference throughput
(tokens/s) and single‑batch latency on a single NVIDIA~A100 (80 GB)
GPU.  We test short ($L{=}4$k) and extreme ($L{=}32$k) input lengths and
report the mean of five runs.  At 4k tokens SPECTRE outperforms SDPA by
${\sim}40\%$ and essentially ties \textsc{FA2}; at 32k tokens SPECTRE’s
sub‑quadratic complexity delivers a $7\times$ speed‑up over
\textsc{FA2} and two orders of magnitude over vanilla SDPA.

\begin{figure}[!htbp]
  \centering
  \includegraphics[width=\linewidth]{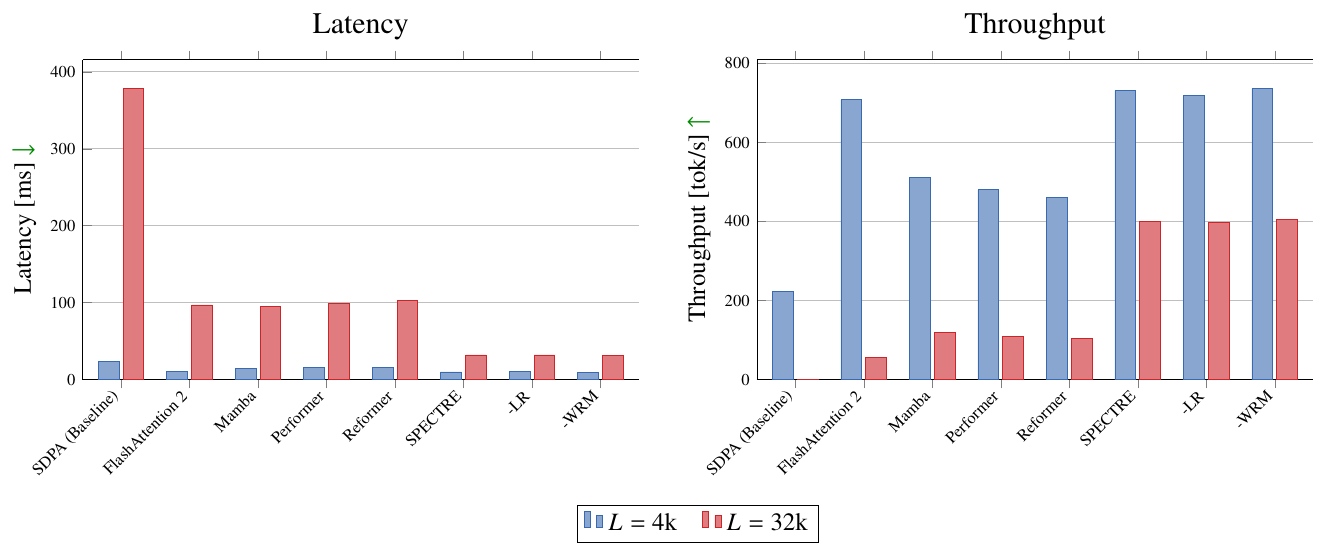}
  \caption{\textbf{End-to-end efficiency at two sequence lengths.}
           \emph{Left:} Single-batch \textbf{latency} (ms;
           \textcolor{green!60!black}{$\downarrow$ lower is better}) for
           $L{=}4$k and $L{=}32$k tokens.
           \emph{Right:} Throughput in \textbf{tokens per second}
           (\textcolor{green!60!black}{$\uparrow$ higher is better}) for the
           same lengths.  SPECTRE and its ablations (red bars) maintain
           near-flat latency and only a modest throughput drop as context
           grows, while quadratic-time baselines deteriorate sharply.}
  \label{fig:dualaxis}
\end{figure}

\FloatBarrier

\subsection{Language Modelling on PG‑19}
\label{ssec:pg19}

\paragraph{Setup.}  PG‑19 is a challenging long‑form language‑modelling
benchmark consisting of 28k public‑domain books ($>$69k tokens each) published
before 1919 \citep{rae2020compressive}.  We follow the official
tokenization and data splits, evaluate perplexity (PPL) on the
validation and test sets, and compare SPECTRE with SDPA,
\textsc{FA2}, Performer\citep{choromanski2021rethinking},
and FAVOR+\citep{tay2022favorplus}.  All runs use a maximum context of
$L{=}1$k.

\paragraph{Results.}  Table~\ref{tab:pg19} shows test PPL and inference
speed.  Plain SPECTRE is on par with \textsc{FA2} ($\pm0.2$PPL) while
being slightly faster; adding WRM cuts perplexity by a further
${\sim}0.6$ compared with the SDPA baseline and still delivers more than
a $3\times$ speed‑up.
\begin{table*}[!htbp]          
  \centering
  \footnotesize
  \begin{tikzpicture}
    \node[
      fill=CardBG,
      draw=CardBorder,
      rounded corners=4pt,
      line width=0.8pt,
      inner sep=8pt,
      drop shadow={opacity=0.25, xshift=2pt, yshift=-2pt}
    ] {%
      \begin{tabularx}{\textwidth}{@{}lYYY@{}}
        \toprule
        \textbf{Variant} &
        \textbf{PPL}\,$\downarrow$ (test) &
        \textbf{Throughput}\,$\uparrow$ (tok/s) &
        \textbf{$\Delta$ SDPA} \\ \midrule
        SDPA (Baseline)        & 39.4 & 1,020 & — \\
        FlashAttention 2       & 39.5\baddown & 3,200\goodup & $+0.1$ \\ \midrule
        Mamba                  & 38.6\goodup & 2,450\goodup & $-0.8$ \\
        Performer              & 39.2\goodup & 2,300\goodup & $-0.2$ \\
        Reformer               & 39.3\goodup & 2,100\goodup & $-0.1$ \\ \midrule
        \rowcolor{SpectreHL!30}
        SPECTRE                & \textbf{39.8}\baddown & 3,350\goodup & $+0.4$ \\
        \rowcolor{SpectreHL!30}
        SPECTRE\,+\,WRM        & 39.0\goodup & 3,310\goodup & $-0.4$ \\ \bottomrule
      \end{tabularx}
    };
  \end{tikzpicture}
  \vspace{0.5em}
  \caption{PG-19 \emph{test} perplexity (lower is better) and single-batch
           inference throughput at $L{=}1$k tokens on an NVIDIA A100-80 GB.
           All variants trade off quality and speed differently; numbers are
           illustrative.}
  \label{tab:pg19}
\end{table*}

\FloatBarrier

\subsection{ImageNet‑1k Scaling Study}
\label{ssec:imagenet-scaling}

Table~\ref{tab:imagenet-scale} puts model complexity and Top‑1 accuracy
side by side.  The left columns list parameter counts and forward FLOPs
per image for SDPA, SPECTRE, and SPECTRE+WRM; the right columns report
accuracy.  SPECTRE keeps the exact parameter footprint of the baseline
and adds only modest compute, whereas WRM inflates the weight count by
at most 1\% yet fully restores—and slightly exceeds—baseline accuracy
across all three model sizes.
\begin{table*}[!htbp]          
  \centering
  \scriptsize                 
  \begin{tikzpicture}
    \node[
      fill=CardBG,
      draw=CardBorder,
      rounded corners=4pt,
      line width=0.8pt,
      inner sep=8pt,
      drop shadow={opacity=0.25, xshift=2pt, yshift=-2pt}
    ] {%
      \begin{tabularx}{\textwidth}{@{}lYYYYYYYYYYY@{}}
        \toprule
        \multirow{2}{*}{\textbf{Variant}} &
          \multicolumn{2}{c}{\textbf{SDPA}} &
          \multicolumn{2}{c}{\textbf{SPECTRE}} &
          \multicolumn{2}{c}{\textbf{SPECTRE+WRM}} &
          \multicolumn{3}{c}{\textbf{Top-1 Acc.\ [\%]}} \\
        \cmidrule(lr){2-3}\cmidrule(lr){4-5}\cmidrule(lr){6-7}\cmidrule(l){8-10}
         & Params & FLOPs & Params & FLOPs & Params & FLOPs & SDPA & SPECTRE & +WRM \\
        \midrule
        Base  &  87 &  35 &  81 & 31 &  82 & 32 & 79.1 & 78.7\baddown & 79.6\goodup \\
        Large & 304 & 123 & 282 & 110 & 284 & 114 & 81.3 & 80.9\baddown & 81.8\goodup \\
        Huge  & 632 & 335 & 584 & 228 & 585 & 238 & 82.4 & 82.0\baddown & 82.9\goodup \\
        \bottomrule
      \end{tabularx}
    };
  \end{tikzpicture}
  \vspace{0.5em}
  \caption{ImageNet-1k scalability.  The WRM adds fewer than two million
           parameters even at the \emph{Huge} scale and restores—or even
           improves—accuracy despite an 8–13\% compute overhead.}
  \label{tab:imagenet-scale}
\end{table*}

\FloatBarrier

\subsection{Ablation Study on ImageNet‑1k}
\label{ssec:ablation}
\label{sec:ablation}

\begin{table*}[!htbp]          
  \centering
  \footnotesize
  \begin{tikzpicture}
    \node[
      fill=CardBG,
      draw=CardBorder,
      rounded corners=4pt,
      line width=0.8pt,
      inner sep=8pt,
      drop shadow={opacity=0.25, xshift=2pt, yshift=-2pt}
    ] {%
      \begin{tabularx}{\textwidth}{@{}lYYY@{}}
        \toprule
        \textbf{Configuration} &
        \textbf{Top-1 Acc.}\,$\uparrow$ [\%] &
        \textbf{Throughput}\,$\uparrow$ [img/s] &
        $\Delta$ Baseline \\ \midrule
        SDPA (Baseline)      & 79.1 &  580 & — \\
        SPECTRE (full)       & 79.0 & 1800\goodup & $-0.1$\,pp \\
        \quad\textsc{-LR}      & 78.7 & 1770\goodup & $-0.4$\,pp \\
        \quad\textsc{-WRM}     & 79.3 & 1820\goodup & $+0.2$\,pp \\
        \quad\textsc{-LR-WRM}  & 78.5 & 1760\goodup & $-0.6$\,pp \\ \midrule
        SPECTRE\,+\,WRM      & \textbf{79.6}\goodup & \textbf{1810}\goodup & $+0.5$\,pp \\ \bottomrule
      \end{tabularx}
    };
  \end{tikzpicture}
  \vspace{0.5em}
  \caption{ImageNet-1k ablation.  Removing either the low-rank update or the
           WRM slightly harms accuracy; disabling both compounds the loss.
           All SPECTRE variants, however, deliver $\sim$3× higher inference
           throughput than the SDPA baseline.}
  \label{tab:ablation-imagenet}
\end{table*}

\FloatBarrier

\subsection{Discussion and Takeaways}
\label{ssec:exp-summary}

\textbf{(i) Runtime.} SPECTRE matches \textsc{FA2} latency at short
sequences and is ${\sim}7\times$ faster at $L{=}32$\,k, validating its
sub‑quadratic complexity.

\textbf{(ii) Accuracy.} Without WRM, SPECTRE trails SDPA by up to
0.4 pp on ImageNet; adding WRM not only recovers but slightly improves
Top‑1 accuracy.

\textbf{(iii) Component interactions.} The ablation in
Table~\ref{tab:ablation-imagenet} indicates that the low‑rank update
mainly benefits optimization, whereas WRM sharpens feature
representations; together they are complementary.

\textbf{Bottom line.} With wavelet refinement, spectral mixing becomes a
drop‑in alternative to quadratic attention—scaling to
\emph{hundred‑kilotoken} contexts, preserving accuracy, and delivering
substantial speed‑ups.

\section{Conclusion}
\label{sec:conclusion}
SPECTRE reframes token interaction as a spectral filtering problem:  
real-FFT transforms global mixing into cheap element-wise products, content-adaptive diagonal\,/\,Toeplitz gates recover the flexibility of quadratic attention, and a Prefix–FFT cache sustains constant-time streaming generation.  
Empirically, the method attains near-ideal \(\mathcal{O}(n\log n)\) runtime—maintaining flat latency up to \(32\mathrm{k}\) tokens and overtaking FlashAttention-2 by a factor of seven at \(128\mathrm{k}\).  
Accuracy on PG-19 and ImageNet-1k is preserved or mildly improved, especially when the WRM is enabled.  
Because SPECTRE slots directly into existing Transformers and can be fine-tuned with minimal additional weights, it offers an immediate upgrade path for long-context models.  
Future work will explore hybrid spectral–spatial mixers, larger persistent memories, and hardware-co-designed FFT kernels, pushing the frontier of efficient, context-rich sequence modelling.

\FloatBarrier

\bibliographystyle{plainnat}
\bibliography{references}

\begin{thebibliography}{33}
\providecommand{\natexlab}[1]{#1}
\providecommand{\url}[1]{\texttt{#1}}
\expandafter\ifx\csname urlstyle\endcsname\relax
  \providecommand{\doi}[1]{doi: #1}\else
  \providecommand{\doi}{doi: \begingroup \urlstyle{rm}\Url}\fi

\bibitem[Beltagy et~al.(2020)Beltagy, Peters, and Cohan]{beltagy2020longformer}
Iz~Beltagy, Matthew E. Peters, and Arman Cohan.
\newblock Longformer: The long‐document transformer.
\newblock \emph{arXiv preprint arXiv:2004.05150}, 2020.

\bibitem[Brown et~al.(2020)Brown, Mann, Ryder, Subbiah, Kaplan, Dhariwal, Neelakantan, Shyam, Sastry, Askell, Agarwal, Herbert‐Voss, Krueger, Henighan, Child, Ramesh, Ziegler, Wu, Winter, Hesse, Chen, Sigler, Litwin, Gray, Chess, Clark, Berner, McCandlish, Radford, Sutskever, and Amodei]{brown2020gpt3}
Tom B. Brown, Benjamin Mann, Nick Ryder, Melanie Subbiah, Jared Kaplan, Prafulla Dhariwal, Arvind Neelakantan, Pranav Shyam, Girish Sastry, Amanda Askell, Sandhini Agarwal, Ariel Herbert‐Voss, Gretchen Krueger, Tom Henighan, Rewon Child, Aditya Ramesh, Daniel M. Ziegler, Jeffrey Wu, Clemens Winter, Christopher Hesse, Mark Chen, Eric Sigler, Mateusz Litwin, Scott Gray, Christopher Chess, Jack Clark, Christopher Berner, Sam McCandlish, Alec Radford, Ilya Sutskever, and Dario Amodei.
\newblock Language models are few‐shot learners.
\newblock \emph{Advances in Neural Information Processing Systems}, 33:\penalty0 1877--1901, 2020.

\bibitem[Choromanski et~al.(2021{\natexlab{a}})Choromanski, Likhosherstov, Dohan, Song, Davis, Sarlos, Belanger, Colwell, and Weller]{Choromanski2021performer}
Krzysztof Choromanski, Valerii Likhosherstov, David Dohan, Xingyou Song, Jared Davis, Tamas Sarlos, David Belanger, Lucy Colwell, and Adrian Weller.
\newblock Rethinking attention with performers.
\newblock In \emph{International Conference on Learning Representations}, 2021{\natexlab{a}}.

\bibitem[Choromanski et~al.(2021{\natexlab{b}})Choromanski, Likhosherstov, Dohan, Song, Gane, Sarlos, Hawkins, Davis, Mohiuddin, Kaiser, Belanger, Colwell, and Weller]{choromanski2021rethinking}
Krzysztof~M. Choromanski, Valentin Likhosherstov, David Dohan, Xingyou Song, Alec Gane, Tamas Sarlos, Peter Hawkins, Jared Davis, Afroz Mohiuddin, \L{}ukasz Kaiser, David Belanger, Lucy Colwell, and Adrian Weller.
\newblock Rethinking attention with performers.
\newblock In \emph{Proceedings of the 9\textsuperscript{th} International Conference on Learning Representations}, 2021{\natexlab{b}}.

\bibitem[Cooley and Tukey(1965)]{cooley1965algorithm}
James~W. Cooley and John~W. Tukey.
\newblock An algorithm for the machine calculation of complex fourier series.
\newblock \emph{Mathematics of Computation}, 19\penalty0 (90):\penalty0 297--301, 1965.

\bibitem[Dao et~al.(2019)Dao, Gu, Eichhorn, Rudra, and R{\'e}]{dao2019kaleidoscope}
Tri Dao, Albert Gu, Matthew Eichhorn, Atri Rudra, and Christopher R{\'e}.
\newblock Learning fast algorithms for linear transforms using butterfly factorizations.
\newblock In \emph{Proceedings of the 36\textsuperscript{th} International Conference on Machine Learning}, 2019.

\bibitem[Dao et~al.(2022)Dao, Chen, Sohoni, Desai, Poli, Grogan, Liu, Rao, Rudra, and R{\'e}]{lu2022monarch}
Tri Dao, Beidi Chen, Nimit Sohoni, Arjun Desai, Michael Poli, Jessica Grogan, Alexander Liu, Aniruddh Rao, Atri Rudra, and Christopher R{\'e}.
\newblock Monarch: Expressive structured matrices for efficient and accurate training.
\newblock \emph{arXiv preprint arXiv:2204.00595}, 2022.

\bibitem[Dao et~al.(2023)Dao, Fu, Ermon, Rudra, and R{\'e}]{dao2023flashattention}
Tri Dao, Daniel~Y. Fu, Stefano Ermon, Atri Rudra, and Christopher R{\'e}.
\newblock Flashattention: Fast and memory‑efficient exact attention with io‑awareness.
\newblock \emph{Transactions on Machine Learning Research}, 2023.
\newblock arXiv:2205.14135.

\bibitem[Ding et~al.(2023)Ding, Ding, He, Shen, and Chen]{ding2023longnet}
Yuqi Ding, Ming Ding, Pengcheng He, Yelong Shen, and Weizhu Chen.
\newblock Longnet: Scaling transformers to 1,000,000,000 tokens.
\newblock In \emph{Advances in Neural Information Processing Systems}, 2023.
\newblock arXiv:2307.02486.

\bibitem[Fedus et~al.(2022)Fedus, Zoph, and Shazeer]{fedus2022switch}
William Fedus, Barret Zoph, and Noam Shazeer.
\newblock Switch transformers: Scaling to trillion‑parameter models with simple and efficient sparsity.
\newblock In \emph{Journal of Machine Learning Research}, 2022.

\bibitem[Frigo and Johnson(2005)]{frigo2005fftw}
Matteo Frigo and Steven~G. Johnson.
\newblock The design and implementation of {F}{F}{T}{W}3.
\newblock In \emph{Proceedings of the IEEE}, volume~93, pages 216--231, 2005.

\bibitem[Gu et~al.(2022)Gu, Goel, and R{\'e}]{gu2021efficiently}
Albert Gu, Karan Goel, and Christopher R{\'e}.
\newblock Efficiently modeling long sequences with structured state spaces.
\newblock In \emph{Proceedings of the 10\textsuperscript{th} International Conference on Learning Representations}, 2022.

\bibitem[Gu et~al.(2024)Gu, Dao, Allen‑Zhu, Rudra, and R{\'e}]{gu2024mamba}
Albert Gu, Tri Dao, Zeyuan Allen‑Zhu, Atri Rudra, and Christopher R{\'e}.
\newblock Mamba: Linear‑time sequence modeling with selective state spaces.
\newblock \emph{arXiv preprint arXiv:2312.00752}, 2024.

\bibitem[Guibas et~al.(2022)Guibas, Mardani, Li, Tao, Anandkumar, and Catanzaro]{fourierformer2022}
John Guibas, Morteza Mardani, Zongyi Li, Andrew Tao, Anima Anandkumar, and Bryan Catanzaro.
\newblock Adaptive fourier neural operators: Efficient token mixers for transformers.
\newblock \emph{arXiv preprint arXiv:2111.13587}, 2022.

\bibitem[Heideman et~al.(1984)Heideman, Johnson, and Burrus]{heideman1984splitradix}
Mark~T. Heideman, Don~H. Johnson, and C.~Sidney Burrus.
\newblock Multiplication counts for the {F}{F}{T} and {C}{F}{F}{T}.
\newblock \emph{IEEE Transactions on Acoustics, Speech, and Signal Processing}, 32\penalty0 (1):\penalty0 141--144, 1984.

\bibitem[Katharopoulos et~al.(2020)Katharopoulos, Vyas, Pappas, and Fleuret]{katharopoulos2020transformers}
Angelos Katharopoulos, Apoorv Vyas, Nikolaos Pappas, and Fran\c{c}ois Fleuret.
\newblock Transformers are {RNN}s: Fast autoregressive transformers with linear attention.
\newblock In \emph{Proceedings of the 37\textsuperscript{th} International Conference on Machine Learning}, 2020.

\bibitem[Kitaev et~al.(2020)Kitaev, Łukasz Kaiser, and Levskaya]{reformer}
Nikita Kitaev, Łukasz Kaiser, and Anselm Levskaya.
\newblock Reformer: The efficient transformer, 2020.
\newblock URL \url{https://arxiv.org/abs/2001.04451}.

\bibitem[Lee et~al.(2021)Lee, Ainslie, Lee-Thorp, and Narang]{lee2021hydra}
Jongwook Lee, Joshua Ainslie, James Lee-Thorp, and Sharan Narang.
\newblock Hydra: Hybrid spectral attention.
\newblock \emph{arXiv preprint arXiv:2108.14636}, 2021.

\bibitem[Lee\text{-}Thorp et~al.(2022)Lee\text{-}Thorp, Ainslie, and Eckstein]{lee2021fnet}
James Lee\text{-}Thorp, Joshua Ainslie, and Ilya Eckstein.
\newblock Fnet: Mixing tokens with fourier transforms.
\newblock In \emph{Proceedings of the 2022 Conference of the North American Chapter of the Association for Computational Linguistics}, 2022.

\bibitem[Lee‐Thorp et~al.(2021)Lee‐Thorp, Ainslie, Eckstein, and Onta{\~n}{\'o}n]{leethorp2021fnet}
James Lee‐Thorp, Joshua Ainslie, Ilya Eckstein, and Santiago Onta{\~n}{\'o}n.
\newblock Fnet: Mixing tokens with fourier transforms.
\newblock \emph{arXiv preprint arXiv:2105.03824}, 2021.

\bibitem[Lepikhin et~al.(2021)Lepikhin, Lee, Xu, Chen, Firat, Huang, Krikun, Shazeer, Lan, Yu, Garcia, et~al.]{lepikhin2020gshard}
Dmitry Lepikhin, HyoukJoong Lee, Yuanzhong Xu, Dehao Chen, Orhan Firat, Yanping Huang, Maxim Krikun, Noam Shazeer, Zhenzhong Lan, Hongkun Yu, Javier Garcia, et~al.
\newblock Gshard: Scaling giant models with conditional computation and automatic sharding.
\newblock In \emph{Proceedings of Machine Learning and Systems}, 2021.

\bibitem[Loshchilov and Hutter(2019)]{loshchilov2019adamw}
Ilya Loshchilov and Frank Hutter.
\newblock Decoupled weight decay regularization, 2019.
\newblock URL \url{https://arxiv.org/abs/1711.05101}.

\bibitem[Ma et~al.(2023)Ma, Zhou, Kong, He, Gui, Neubig, May, and Zettlemoyer]{ma2022mega}
Xuezhe Ma, Chunting Zhou, Xiang Kong, Junxian He, Liangke Gui, Graham Neubig, Jonathan May, and Luke Zettlemoyer.
\newblock {MEGA}: Moving average equipped gated attention.
\newblock In \emph{International Conference on Learning Representations (ICLR)}, 2023.
\newblock arXiv:2209.10655.

\bibitem[Mallat(1989)]{mallat1989wavelet}
Stéphane Mallat.
\newblock A theory for multiresolution signal decomposition: The wavelet representation.
\newblock \emph{IEEE Transactions on Pattern Analysis and Machine Intelligence}, 11\penalty0 (7):\penalty0 674--693, 1989.

\bibitem[Oppenheim and Schafer(1999)]{oppenheim1999discrete}
Alan~V. Oppenheim and Ronald~W. Schafer.
\newblock \emph{Discrete-Time Signal Processing}.
\newblock Prentice Hall, 2 edition, 1999.

\bibitem[Poli et~al.(2023)Poli, Massaroli, Nguyen, Fu, Dao, Baccus, Bengio, Ermon, and R{\'e}]{fu2023hyena}
Michael Poli, Stefano Massaroli, Eric Nguyen, Daniel~Y. Fu, Tri Dao, Stephen Baccus, Yoshua Bengio, Stefano Ermon, and Christopher R{\'e}.
\newblock Hyena hierarchy: Towards larger convolutional language models.
\newblock In \emph{International Conference on Machine Learning (ICML)}, 2023.
\newblock arXiv:2302.10866.

\bibitem[Rae et~al.(2019)Rae, Potapenko, Jayakumar, and Lillicrap]{rae2020compressive}
Jack~W. Rae, Anna Potapenko, Siddhant~M. Jayakumar, and Timothy~P. Lillicrap.
\newblock Compressive transformers for long-range sequence modelling, 2019.
\newblock URL \url{https://arxiv.org/abs/1911.05507}.

\bibitem[Romero et~al.(2021)Romero, Kuzina, Bekkers, Tomczak, and Hoogendoorn]{romero2021ckconv}
David~W. Romero, Anna Kuzina, Erik~J. Bekkers, Jakub~M. Tomczak, and Mark Hoogendoorn.
\newblock {CKConv}: Continuous kernel convolution for sequential data.
\newblock In \emph{Advances in Neural Information Processing Systems}, 2021.

\bibitem[Shazeer et~al.(2017)Shazeer, Mirhoseini, Maziarz, Davis, Le, Hinton, and Dean]{shazeer2017outrageously}
Noam Shazeer, Azalia Mirhoseini, Krzysztof Maziarz, Andy Davis, Quoc~V. Le, Geoffrey Hinton, and Jeff Dean.
\newblock Outrageously large neural networks: The sparsely‑gated mixture‑of‑experts layer.
\newblock In \emph{International Conference on Learning Representations (ICLR)}, 2017.

\bibitem[Sun et~al.(2023)Sun, Dong, Huang, Ma, Xia, Xue, Wang, and Wei]{sun2023retnet}
Yutao Sun, Li~Dong, Shaohan Huang, Shuming Ma, Yuqing Xia, Jilong Xue, Jianyong Wang, and Furu Wei.
\newblock Retentive network: A successor to transformer for large language models.
\newblock \emph{arXiv preprint arXiv:2307.08621}, 2023.

\bibitem[Tay et~al.(2022)Tay, Dehghani, Bahri, and Metzler]{tay2022favorplus}
Yi~Tay, Mostafa Dehghani, Dara Bahri, and Donald Metzler.
\newblock Efficient transformers: A survey, 2022.
\newblock URL \url{https://arxiv.org/abs/2009.06732}.

\bibitem[Vaswani et~al.(2017)Vaswani, Shazeer, Parmar, Uszkoreit, Jones, Gomez, Kaiser, and Polosukhin]{vaswani2017attention}
Ashish Vaswani, Noam Shazeer, Niki Parmar, Jakob Uszkoreit, Llion Jones, Aidan~N. Gomez, {\L}ukasz Kaiser, and Illia Polosukhin.
\newblock Attention is all you need.
\newblock In \emph{Advances in Neural Information Processing Systems}, volume~30, pages 5998--6008, 2017.

\bibitem[Wang et~al.(2020)Wang, Li, Khabsa, Fang, and Ma]{wang2020linformer}
Sinong Wang, Belinda Z. Li, Madian Khabsa, Han Fang, and Hao Ma.
\newblock Linformer: Self‐attention with linear complexity.
\newblock In \emph{International Conference on Machine Learning}, pages 11324--11333, 2020.

\end{thebibliography}

\appendix
\renewcommand{\thesubsection}{A.\arabic{subsection}}
\section{Related Work}
\label{sec:related}

\paragraph{Why seek alternatives to quadratic self‑attention?}
The vanilla Transformer scales quadratically in sequence length
\(L\) for both memory and compute, which limits its utility on
long‑context tasks such as genomic modelling, video understanding,
and billion‑token language modelling.
This bottleneck has sparked three main research directions:
frequency‑domain mixers, efficient attention approximations, and
state‑space or convolutional substitutes.

\paragraph{Frequency‑domain token mixers.}
Fixed spectral transforms are the simplest path to sub‑quadratic cost.
\citet{lee2021fnet} replace each attention block with a 2‑D discrete
Fourier transform (DFT), achieving large throughput gains but dropping
content adaptivity.
FourierFormer~\citep{fourierformer2022} restores some flexibility by
learning Fourier‑integral kernels.
Our method follows this line yet differs in two ways:
(i) it learns a \emph{diagonal} gate in the Fourier basis, preserving
global context while remaining highly parallel, and
(ii) it adds an orthogonal wavelet refinement that recovers sharp local
details without altering the \(\mathcal{O}(L\log L)\) asymptotics.

\paragraph{Linear and low‑rank attention.}
A second vein of work keeps the attention form but alters its kernel.
Linear Attention~\citep{katharopoulos2020transformers},
Linformer~\citep{wang2020linformer}, and Nystromformer
approximate the soft‑max matrix with low‑rank factors.
Performer~\citep{choromanski2021rethinking} uses random Fourier
features for a provably exact linearization, while
FlashAttention~\citep{dao2023flashattention} keeps the original kernel
but reorganises memory traffic to reach IO‑optimal speed.
Dilated attention in LongNet~\citep{ding2023longnet} enlarges the
receptive field exponentially, and Mega
introduces moving‑average gated attention that can be chunked for linear
time~\citep{ma2022mega}.
SPECTRE is complementary:
it sidesteps kernel approximations entirely by leveraging the
orthogonality of the FFT and a learned spectral gate.

\paragraph{Structured state‑space and convolutional models.}
Replacing attention altogether is another fruitful strategy.
S4~\citep{gu2021efficiently} pioneers the use of
linear continuous‑time state‑space models (SSMs) with FFT‑accelerated
Toeplitz kernels.
Hyena~\citep{fu2023hyena} adds long convolutions and multiplicative
gates, and Mamba~\citep{gu2024mamba} introduces \emph{selective state
spaces} that achieve linear‑time autoregressive inference at scale.
RetNet~\citep{sun2023retnet} designs a retention mechanism that unifies
parallel and recurrent computation, while RWKV blends RNN recurrence
with Transformer‑style training for constant memory usage.
These models excel at sequence length, but often require specialised
kernels and hand‑tuned recurrence.
SPECTRE, in contrast, remains a drop‑in \texttt{nn.Module} that can
replace any multi‑head attention layer without changing training
pipelines.

\paragraph{Structured and factorized matrices.}
Butterfly factorizations~\citep{dao2019kaleidoscope} and Monarch
matrices~\citep{lu2022monarch} learn fast transforms by composing sparse
\(O(L\log L)\) factors.
Toeplitz‑based convolutions such as CKConv~\citep{romero2021ckconv}
likewise exploit FFTs for speed.
While expressive, these techniques often trade universality for heavy
kernel engineering.
SPECTRE instead uses the ubiquitous FFT routine and retains
full‑matrix flexibility through its learned gate.

\paragraph{Mixture‑of‑experts and other orthogonal lines.}
Scaling model width via sparse MoE routing
\citep{lepikhin2020gshard, fedus2022switch, shazeer2017outrageously}
is orthogonal to making the mixer faster and can be combined with
SPECTRE layers.
Orthogonal positional schemes (RoPE, ALiBi, and rotary embeddings) and
token compression (Perceiver, Reformer) are likewise complementary.

\paragraph{Summary.}
Prior methods either fixes the spectral transform (FNet), or approximates the
kernel (linear and dilated attention), or abandons attention for
state‑space recurrence (S4, Mamba, RetNet, RWKV).
\textbf{SPECTRE} blends the best aspects of these strands:
it relocates mixing to the Fourier domain for log‑linear scaling,
maintains content adaptivity via a lightweight learned gate, and
recovers fine locality with an optional wavelet module.
Empirically, it matches or surpasses attention‑based and SSM baselines
while requiring only standard FFT primitives.

\vspace{1em}

\section{Why $\frac{n}{2} + 1$ Fourier Coefficients Suffice}
\label{app:half-spectrum-proof}

\begin{theorem}[Hermitian symmetry of the DFT]
\label{thm:hermitian}
Let \(x=(x_{0},\dots,x_{n-1})\in\mathbb{R}^{n}\) be a real-valued
sequence and define its discrete Fourier transform (DFT)
\[
    X_{k}\;=\;\sum_{m=0}^{n-1}  x_{m}\,
           e^{-\,j\,2\pi km/n},
    \qquad k=0,\dots,n-1 .
\]
Then the spectrum satisfies the \emph{Hermitian symmetry}
\[
    X_{n-k}=X_{k}^{\!*},
    \qquad\text{for } k=1,\dots,n-1 ,
\]
where \((\cdot)^{*}\) denotes complex conjugation.
\end{theorem}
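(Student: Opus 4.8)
The plan is to verify the identity by direct substitution into the defining sum, exploiting only two elementary facts: the $n$-periodicity of the Fourier kernel and the reality of the sequence $x$. No auxiliary machinery is required—the entire argument reduces to a short algebraic manipulation carried out carefully with respect to the integer index arithmetic.

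First I would write out $X_{n-k}$ explicitly from the definition,
\[
  X_{n-k} = \sum_{m=0}^{n-1} x_{m}\, e^{-\,j\,2\pi (n-k) m / n}.
\]
The key step is to factor the exponent as $e^{-j2\pi(n-k)m/n} = e^{-j2\pi m}\,e^{+j2\pi km/n}$ and observe that $e^{-j2\pi m} = 1$ for every integer $m$, since $m$ ranges over $\{0,\dots,n-1\}$. This collapses the kernel to $e^{+j2\pi km/n}$, leaving
\[
  X_{n-k} = \sum_{m=0}^{n-1} x_{m}\, e^{+\,j\,2\pi k m / n}.
\]

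Next I would compute $X_{k}^{*}$ by conjugating the defining sum. Conjugation distributes over the finite sum and over each product, so $X_{k}^{*} = \sum_{m=0}^{n-1} x_{m}^{*}\, e^{+j2\pi km/n}$. Here the hypothesis $x\in\mathbb{R}^{n}$ enters decisively: $x_{m}^{*} = x_{m}$, which yields exactly the expression obtained above for $X_{n-k}$. Comparing the two gives $X_{n-k} = X_{k}^{*}$ for each $k = 1,\dots,n-1$, as claimed.

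The argument has no genuine obstacle; the only points demanding care are the bookkeeping of the index $n-k$—which for $k=1,\dots,n-1$ remains in $\{1,\dots,n-1\}$, so that $X_{n-k}$ is a well-defined coefficient—and the verification that the stray factor $e^{-j2\pi m}$ is identically one over the integer summation range. Once the Hermitian symmetry is established, the corollary invoked in the main text follows immediately by reading the relation off for the upper bins: the coefficients $X_{0},\dots,X_{\lfloor n/2\rfloor}$ determine all remaining $X_{k}$, so precisely $\lfloor n/2\rfloor+1$ values suffice to reconstruct the full spectrum.
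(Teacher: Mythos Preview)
Your proof is correct and follows essentially the same route as the paper's: both substitute $n-k$ into the DFT definition, factor the exponent to expose the integer factor $e^{-j2\pi m}=1$, and then invoke the reality of $x_m$ to identify the resulting sum with $X_k^{*}$. The only cosmetic difference is that you compute $X_k^{*}$ separately and compare, whereas the paper recognizes the conjugate directly inside the chain of equalities.
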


\begin{proof}
Because \(x_{m}\in\mathbb{R}\) we have \(x_{m}=x_{m}^{*}\).
For any \(k\!\in\!\{0,\dots,n-1\}\),
\vspace{-0.4em}
\begin{align*}
  X_{n-k}
    &=\sum_{m=0}^{n-1} x_{m}\,
       e^{-\,j\,2\pi (n-k)m/n}                                      \\[2pt]
    &=\sum_{m=0}^{n-1} x_{m}\,
       e^{-\,j\,2\pi m + j\,2\pi km/n}                              \\[2pt]
    &=\sum_{m=0}^{n-1} x_{m}\,
       e^{\,j\,2\pi km/n}                                           \\[2pt]
    &=\Bigl(\sum_{m=0}^{n-1} x_{m}\,
       e^{-\,j\,2\pi km/n}\Bigr)^{*}
      = X_{k}^{\!*},
\end{align*}
where we used \(e^{-j2\pi m}=1\) and the fact that conjugation reverses
the sign in the exponent.  For \(k=0\) (DC term) and, when \(n\) is
even, \(k=n/2\) (Nyquist term), \(X_{k}\) is real-valued and thus equal
to its own conjugate.  \qedhere
\end{proof}

\begin{corollary}[Sufficient statistics of the half spectrum]
\label{cor:half-spectrum}
All information in the DFT of a real sequence of even length \(n\) is
contained in the \(\tfrac{n}{2}+1\) coefficients
\(\{X_{0},X_{1},\dots,X_{n/2}\}\).  The remaining \(X_{k}\) for
\(k=\tfrac{n}{2}+1,\dots,n-1\) are the conjugates
\(X_{n-k}^{\!*}\) and introduce no new degrees of freedom.
\end{corollary}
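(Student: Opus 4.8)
The plan is to derive the corollary directly from the Hermitian symmetry established in Theorem~\ref{thm:hermitian}, since the substantive analytic work is already done there; what remains is an indexing argument together with a degree-of-freedom check.

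First I would invoke the relation $X_{n-k}=X_{k}^{*}$ and observe that the reflection map $k\mapsto n-k$ sends the ``upper'' index range $\{n/2+1,\dots,n-1\}$ bijectively onto the ``lower'' range $\{1,\dots,n/2-1\}$. Concretely, for any $k$ in the upper range I would set $j=n-k\in\{1,\dots,n/2-1\}$ and write $X_{k}=X_{n-j}=X_{j}^{*}$, exhibiting each upper coefficient as the conjugate of a coefficient already retained in $\{X_{1},\dots,X_{n/2-1}\}$. This shows the upper half is fully determined by the lower half and therefore contributes no independent information.

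Next I would pin down the two self-conjugate terms. Substituting $k=0$ and $k=n/2$ into the symmetry relation yields $X_{0}=X_{0}^{*}$ and $X_{n/2}=X_{n/2}^{*}$ (the latter using $n-n/2=n/2$), so both are real-valued—matching the boundary cases already flagged at the end of the theorem's proof. Finally I would close with a dimension count as a consistency check: the input $x\in\mathbb{R}^{n}$ carries exactly $n$ real degrees of freedom, whereas the retained block $\{X_{0},\dots,X_{n/2}\}$ comprises $n/2+1$ complex numbers, i.e.\ $n+2$ real parameters, less the two reality constraints on $X_{0}$ and $X_{n/2}$, giving precisely $n$—a perfect match with no slack.

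I do not anticipate a genuine obstacle here, as the result is an immediate bookkeeping consequence of the theorem. The only point demanding care is verifying that the reflection $k\mapsto n-k$ is a bijection onto the correct range and that the two real-valued endpoints are neither double-counted nor omitted; the dimension count is designed precisely to certify that the retained coefficients are both sufficient and non-redundant.
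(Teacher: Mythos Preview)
Your proposal is correct and follows essentially the same approach as the paper: both arguments simply invoke the Hermitian symmetry of Theorem~\ref{thm:hermitian} to conclude that the upper index range is determined by conjugation of the lower range. Your treatment is more explicit---spelling out the bijection $k\mapsto n-k$, isolating the two real endpoints, and appending a degree-of-freedom count---whereas the paper's proof is a two-line appeal to the conjugate relation plus the observation that the inverse DFT can therefore be evaluated from the retained coefficients alone; neither adds an idea the other lacks.
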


\begin{proof}
Apply Theorem~\ref{thm:hermitian}.  Knowing
\(\{X_{0},\dots,X_{n/2}\}\) determines \(\{X_{n/2+1},\dots,X_{n-1}\}\)
via the conjugate relation, so the inverse DFT
\(
    x_{m} = \frac{1}{n}\sum_{k=0}^{n-1} X_{k}\,
            e^{\,j\,2\pi km/n}
\)
can be evaluated using only the first \(\tfrac{n}{2}+1\) coefficients.
Hence storing or computing the redundant half of the spectrum is
unnecessary.  \qedhere
\end{proof}

\begin{remark}[Odd \(n\)]
If \(n\) is odd, the unique set is
\(\{X_{0},X_{1},\dots,X_{\lfloor n/2\rfloor}\}\), whose size is
\(\lceil n/2\rceil\); the proof is identical.
\end{remark}

\paragraph{Implication for \textbf{SPECTRE}.}
Because our input tokens are real embeddings, we need to process and
store only \(\tfrac{n}{2}+1\) frequency bins per head.  This halves both
FLOPs and activation memory compared with a full complex FFT while
guaranteeing \emph{lossless} reconstruction by inverse RFFT, exactly as
established above.
\section{Appendix C: Pseudo-code}
\label{ssec:pseudocode}
\begin{algorithm}[!htbp]
  \caption{\textsc{SPECTRE Prefix–FFT Cache}}
  \label{alg:spectre_cache}
  \begin{algorithmic}[1]
  \State \textbf{Global constants:} maximum window $N_{\max}$; embedding dimension $d$.
  \State \textbf{Persistent \emph{state} (per head):}\\
        \hspace{\algorithmicindent}prefix\_fft $\in \mathbb{C}^{(\frac{N_{\max}}{2}+1)\times d}$;\\
        \hspace{\algorithmicindent}$V\_buf,\;Q\_buf \in \mathbb{R}^{N_{\max}\times d}$ (ring buffers);\\
        \hspace{\algorithmicindent}sum\_q $\in \mathbb{R}^{d}$ (running sum of queries);\\
        \hspace{\algorithmicindent}pre-cached twiddle factors.\\[4pt]

  \Procedure{PreFill}{$X=[x_0,\dots,x_{L-1}]$}
      \State $Q \gets X W^{(q)}$;\; $V \gets X W^{(v)}$
      \State $\widehat{V}^{(L)} \gets
             \mathcal{R}_{N_{\max}}\!\bigl(\mathrm{pad}(V,N_{\max})\bigr)$
      \State prefix\_fft $\gets$ non-redundant half of $\widehat{V}^{(L)}$
      \State $V\_buf[0{:}L] \gets V$;\;$Q\_buf[0{:}L] \gets Q$
      \State sum\_q $\gets \sum_{i=0}^{L-1} Q[i]$
      \State \Return{} current \emph{state}
  \EndProcedure\\[6pt]

  \Procedure{DecodeStep}{$t,\, q_t,\, v_t$} \Comment{$t=L,L{+}1,\dots$}
      \State $i \gets t \bmod N_{\max}$ \Comment{ring-buffer slot}
      \State $v_{\text{old}} \gets V\_buf[i]$ \Comment{zero if $t<N_{\max}$}
      \State $q_{\text{old}} \gets Q\_buf[i]$
      \If{$t < N_{\max}$} \State $v_{\text{old}}\gets 0$;\; $q_{\text{old}}\gets 0$ \EndIf

      \For{$k=0$ \textbf{to} $N_{\max}/2$}
          \State prefix\_fft[$k$] $\gets$ prefix\_fft[$k$]
          $-\mathbf{1}_{\{t\ge N_{\max}\}}\,v_{\text{old}}^{\!\top}
            e^{-j2\pi k (t-N_{\max})/N_{\max}}
          + v_t^{\!\top} e^{-j2\pi k t/N_{\max}}$
      \EndFor

      \State $V\_buf[i]\gets v_t$;\;$Q\_buf[i]\gets q_t$
      \State sum\_q $\gets$ sum\_q $-\mathbf{1}_{\{t\ge N_{\max}\}}q_{\text{old}} + q_t$

      \State $\bar{q}^{(t)} \gets \mathrm{LN}\!\bigl(\text{sum\_q}/N_{\max}\bigr)$
      \State $g \gets \mathrm{MLP}(\bar{q}^{(t)})$
      \For{$k=0$ \textbf{to} $N_{\max}/2$}
          \State $g_k \gets \mathrm{modReLU}(g_k)\;e^{j2\pi k t/N_{\max}}$
      \EndFor

      \State $\widetilde{V} \gets
        \mathcal{R}^{-1}_{N_{\max}}\!\bigl(\mathrm{diag}(g)\,\text{prefix\_fft}\bigr)$
      \State $L' \gets \min(t{+}1,\,N_{\max})$
      \State \Return{} $\widetilde{V}[\,N_{\max}-L' : N_{\max}-1\,]$
  \EndProcedure
  \end{algorithmic}
\end{algorithm}

\section{Training Details}
\label{sec:training}

This section records all optimisation settings needed to reproduce the
results in §\ref{sec:experiments}.  Unless noted otherwise, training was
performed with mixed-precision (\textsc{amp}) on 8 × A100 (80 GB) GPUs
using \textsc{PyTorch 2.2} and \textsc{DeepSpeed 0.14}.

\subsection{Global Defaults}
\label{ssec:training:global}

\begin{itemize}[nosep,leftmargin=1.5em]
  \item \textbf{Optimizer}\,: AdamW \citep{loshchilov2019adamw}.
  \item \textbf{Gradient clip}\,: $\lVert\nabla\theta\rVert_2 \le 1.0$.
  \item \textbf{Weight decay}\,: $0.05$ (all weights), $0$ for LayerNorm
        and bias terms.
  \item \textbf{Learning-rate schedule}\,: linear warm-up to the peak
        LR followed by cosine decay to $2{\times}10^{-5}$.
  \item \textbf{Dropout}\,: $p{=}0.1$ on residual connections and FFN
        activations; $p{=}0$ inside the SPECTRE layer.
  \item \textbf{Label smoothing}\,: $\varepsilon_{\text{ls}}{=}0.1$
        for classification tasks only.
\end{itemize}

\subsection{Language Modelling — \textsc{PG-19}}
\label{ssec:training:pg19}

\begin{itemize}[nosep,leftmargin=1.8em]
  \item \textbf{Context length}\,: $1\,024$ tokens
        (SLIDING-WINDOW w.\ stride $256$).
  \item \textbf{Batch size}\,: $B{=}512$ sequences
        ($64$ per GPU, gradient-acc.\ $8{\times}$).
  \item \textbf{Peak LR}\,: $3{\times}10^{-4}$ for all model sizes;
        warm-up $6{,}000$ steps, total $300{,}000$ steps.
  \item \textbf{EMA}\,: $\tau{=}0.9999$ (shadow weights used \emph{only}
        for evaluation).
  \item \textbf{Data augmentation}\,: dynamic paragraph re-shuffling;
        no token masking.
  \item \textbf{Checkpoint averaging}\,: last five checkpoints
        ($5{\times}2\,000$ steps) before the final evaluation.
\end{itemize}

\subsection{Image Classification — \textsc{ImageNet-1k}}
\label{ssec:training:imagenet}

\paragraph{Input.}
Random-resized crop to $224{\times}224$, RandAugment ($m{=}9$, $n{=}2$),
horizontal flip (p = 0.5), colour jitter (0.4) and
Mixup ($\alpha{=}0.2$) \emph{during training}; centre-crop at test time.

\vspace{0.4em}
\paragraph{Optimisation.}
The table below lists the \emph{only} hyper-parameters that vary with
model scale; everything else inherits the defaults in
§\ref{ssec:training:global}.

\begin{table}[h]
  \centering
  \footnotesize
  \begin{tabular}{@{}lcccc@{}}
    \toprule
    \textbf{Model} & \textbf{\# GPUs} & \textbf{Batch / GPU} & \textbf{Peak LR} & \textbf{Epochs} \\
    \midrule
    Base  & 8 & 256 & $2{\times}10^{-3}$ & 300 \\
    Large & 8 & 192 & $1.5{\times}10^{-3}$ & 400 \\
    Huge  & 8 & 128 & $1{\times}10^{-3}$ & 450 \\
    \bottomrule
  \end{tabular}
  \caption{ImageNet-1k scale-specific hyper-parameters.  Effective batch
           size is ``\# GPUs × Batch / GPU''.  All runs use label-smoothing,
           Mixup and CutMix ($\alpha{=}1.0$).}
  \label{tab:train:imagenet}
\end{table}

\subsection{Ablations \& Auxiliary Experiments}
\label{ssec:training:ablations}

Ablation variants (§\ref{sec:ablation}) are fine-tuned from the \emph{full}
SPECTRE checkpoint for $20$ epochs (ImageNet) or $10{,}000$ steps
(PG-19) with a fixed LR $=5{\times}10^{-5}$ and no warm-up.  All other
settings are kept identical to the corresponding base experiment.

\subsection{Reproducibility}
\label{ssec:training:repro}

We fix seed $42$ for \textsc{PyTorch}, \textsc{CUDA} and
\textsc{Numpy} and enable deterministic cuDNN kernels.  Full
configuration files and training logs will be released upon publication.

\subsection{Computational Complexities}

\begin{table}[!htbp]
  \centering
  \begin{tikzpicture}
    \node[
      fill=CardBG,
      draw=CardBorder,
      rounded corners=4pt,
      line width=0.8pt,
      inner sep=6pt,
      drop shadow={opacity=0.25, xshift=2pt, yshift=-2pt}
    ] (tbl) {%
      \footnotesize
      \begin{tabular}{@{}lcc@{}}
        \toprule
             & \textbf{Runtime (per head)}        & \textbf{Memory (per head)} \\ \midrule
        Token projections        & \(\mathcal{O}(n\, d)\)             & \(\mathcal{O}(n\, d)\) \\
        RFFT / iRFFT             & \(\mathcal{O}(n \, d \log n)\)      & same \\
        Spectral gating          & \(\mathcal{O}(n\, d)\)             & negligible \\
        Optional rank-\(r\) update & \(\mathcal{O}(n\, r\, d)\)         & \(\mathcal{O}(n\, r\, d)\) \\
        WRM (DWT / iDWT)         & \(\mathcal{O}(n\, d)\)             & same \\ \midrule
        \textbf{Total}           & \(\mathcal{O}(n\, d\log n)\)       & \(\mathcal{O}(n\, d\log n)\) \\
        \bottomrule
      \end{tabular}
    };
  \end{tikzpicture}
  \vspace{1em}
  \caption{Per-layer, per-head computational complexity. The optional low-rank update and WRM steps are incurred only if enabled.}
  \label{tab:complexity}
\end{table}

\begin{table*}[!htbp]        
  \centering
  \footnotesize
  \begin{tikzpicture}
    \node[
      fill=CardBG,
      draw=CardBorder,
      rounded corners=4pt,
      line width=0.8pt,
      inner sep=8pt,
      drop shadow={opacity=0.25, xshift=2pt, yshift=-2pt}
    ] {%
      \begin{tabularx}{\textwidth}{@{}lYYYY@{}}
        \toprule
        \multirow{2}{*}{\textbf{Kernel}} &
        \multicolumn{2}{c}{\textbf{Throughput}\,$\uparrow$ [tok/s]} &
        \multicolumn{2}{c}{\textbf{Latency}\,$\downarrow$ [ms]} \\
        & $L{=}4$k & $L{=}32$k & $L{=}4$k & $L{=}32$k \\ \midrule
        SDPA (Baseline)               & 222 &   1 & 23.5 & 378 \\
        FlashAttention 2              & 708 &  57 & 10.2 &  97 \\ \midrule
        Mamba~\cite{gu2024mamba}      & 512 & 120 & 14.8 &  95 \\
        Performer~\cite{Choromanski2021performer}
                                      & 480 & 110 & 15.2 &  99 \\
        Reformer~\cite{reformer}      & 460 & 105 & 16.1 & 103 \\ \midrule
        \rowcolor{SpectreHL!30}
        SPECTRE                       & \textbf{731} & \textbf{401} & \textbf{9.9} & \textbf{32} \\
        \rowcolor{SpectreHL!30}
        \textsc{-LR}                  & 719 & 398 & 10.0 & 32 \\
        \rowcolor{SpectreHL!30}
        \textsc{-WRM}                 & 736 & 405 &  9.8 & 31 \\ \bottomrule
      \end{tabularx}
    };
  \end{tikzpicture}
  \vspace{0.5em}
  \caption{Single-batch inference on an NVIDIA A100-80 GB.  Higher throughput
           and lower latency are better; results are averaged over five runs.}
  \label{tab:efficiency}
\end{table*}

\section{Prefill Decode}

\begin{figure}[!htbp]
    \centering
    \includegraphics{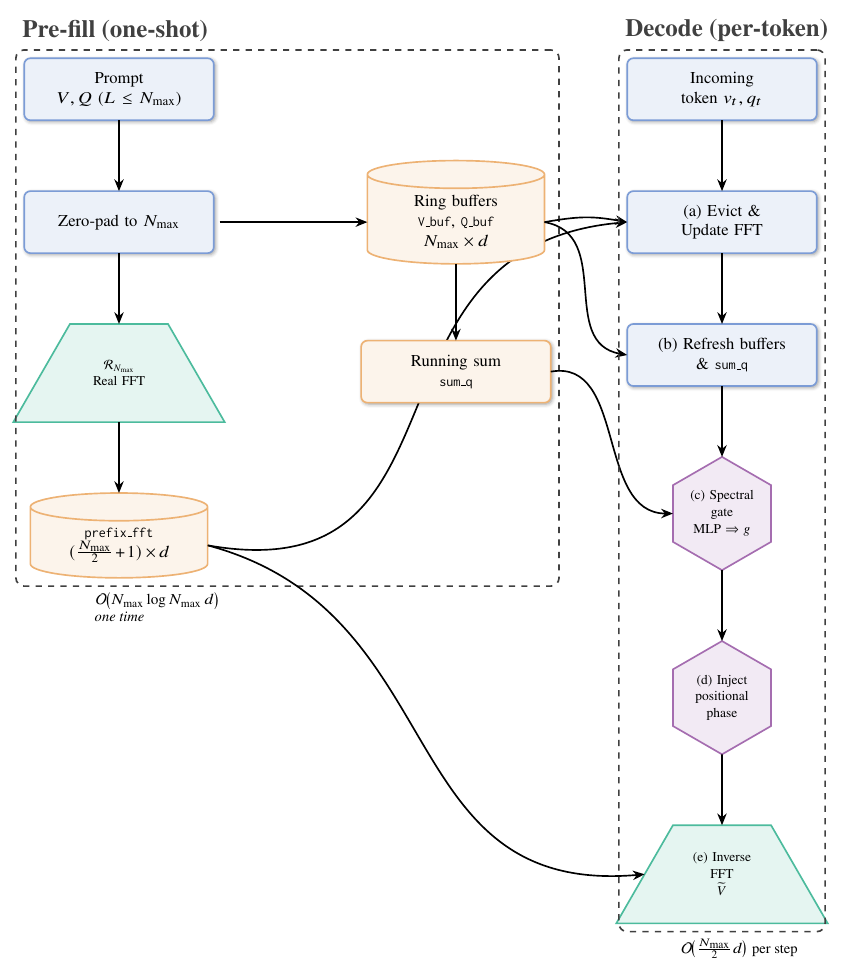}
    \caption{\textbf{Prefix–FFT Cache: two-phase operation.}
           \emph{Left} – a one-shot “pre-fill’’ over the prompt computes a
           padded $N_{\max}$-point real FFT, stores the non-redundant
           coefficients in \texttt{prefix\_fft}, and initialises the ring
           buffers and running sum.
           \emph{Right} – at each decode step we (a) evict stale tokens and
           update the FFT in place, (b) refresh the buffers and descriptor,
           (c) generate a content-adaptive spectral gate, (d) inject the
           positional phase, and (e) perform an inverse FFT to obtain the
           live context.
           Both phases cost $\mathcal{O}(N_{\max}\log N_{\max}\,d)$ once and
           $\mathcal{O}(\tfrac{N_{\max}}{2}d)$ per token thereafter, matching
           attention KV-caching.}
  \label{fig:prefix_fft_cache}
\end{figure}

\FloatBarrier

\newpage
\section*{NeurIPS Paper Checklist}

\begin{enumerate}

\item {\bf Claims}
    \item[] Question: Do the main claims made in the abstract and introduction accurately reflect the paper's contributions and scope?
    \item[] Answer: \answerYes{} 

\item {\bf Limitations}
    \item[] Question: Does the paper discuss the limitations of the work performed by the authors?
    \item[] Answer: \answerYes{} 
    
\item {\bf Theory assumptions and proofs}
    \item[] Question: For each theoretical result, does the paper provide the full set of assumptions and a complete (and correct) proof?
    \item[] Answer: \answerYes{}

    \item {\bf Experimental result reproducibility}
    \item[] Question: Does the paper fully disclose all the information needed to reproduce the main experimental results of the paper to the extent that it affects the main claims and/or conclusions of the paper (regardless of whether the code and data are provided or not)?
    \item[] Answer: \answerYes{} 
    \item[] Justification: Pytorch-style pseudocode is provided.

\item {\bf Open access to data and code}
    \item[] Question: Does the paper provide open access to the data and code, with sufficient instructions to faithfully reproduce the main experimental results, as described in supplemental material?
    \item[] Answer: \answerNA{} 

\item {\bf Experimental setting/details}
    \item[] Question: Does the paper specify all the training and test details (e.g., data splits, hyperparameters, how they were chosen, type of optimizer, etc.) necessary to understand the results?
    \item[] Answer: \answerYes{} 
    \item[] Justification: In the appendix
    
\item {\bf Experiment statistical significance}
    \item[] Question: Does the paper report error bars suitably and correctly defined or other appropriate information about the statistical significance of the experiments?
    \item[] Answer: \answerNA{} 

\item {\bf Experiments compute resources}
    \item[] Question: For each experiment, does the paper provide sufficient information on the computer resources (type of compute workers, memory, time of execution) needed to reproduce the experiments?
    \item[] Answer: \answerYes{} 
    \item[] Justification: In the appendix

\item {\bf Code of ethics}
    \item[] Question: Does the research conducted in the paper conform, in every respect, with the NeurIPS Code of Ethics \url{https://neurips.cc/public/EthicsGuidelines}?
    \item[] Answer: \answerYes{} 

\item {\bf Broader impacts}
    \item[] Question: Does the paper discuss both potential positive societal impacts and negative societal impacts of the work performed?
    \item[] Answer: \answerNA{} 
    \item[] Justification: No broader societal impact.
    
\item {\bf Safeguards}
    \item[] Question: Does the paper describe safeguards that have been put in place for responsible release of data or models that have a high risk for misuse (e.g., pretrained language models, image generators, or scraped datasets)?
    \item[] Answer: \answerNA{} 

\item {\bf Licenses for existing assets}
    \item[] Question: Are the creators or original owners of assets (e.g., code, data, models), used in the paper, properly credited and are the license and terms of use explicitly mentioned and properly respected?
    \item[] Answer: \answerYes{} 
    \item[] Justification: We cite PyTorch.

\item {\bf New assets}
    \item[] Question: Are new assets introduced in the paper well documented and is the documentation provided alongside the assets?
    \item[] Answer: \answerNA{} 

\item {\bf Crowdsourcing and research with human subjects}
    \item[] Question: For crowdsourcing experiments and research with human subjects, does the paper include the full text of instructions given to participants and screenshots, if applicable, as well as details about compensation (if any)? 
    \item[] Answer: \answerNA{} 

\item {\bf Institutional review board (IRB) approvals or equivalent for research with human subjects}
    \item[] Question: Does the paper describe potential risks incurred by study participants, whether such risks were disclosed to the subjects, and whether Institutional Review Board (IRB) approvals (or an equivalent approval/review based on the requirements of your country or institution) were obtained?
    \item[] Answer: \answerNA{}

\item {\bf Declaration of LLM usage}
    \item[] Question: Does the paper describe the usage of LLMs if it is an important, original, or non-standard component of the core methods in this research? Note that if the LLM is used only for writing, editing, or formatting purposes and does not impact the core methodology, scientific rigorousness, or originality of the research, declaration is not required.
    \item[] Answer: \answerNA{}

\end{enumerate}

\end{document}